\documentclass[a4paper]{ifacconf}

\usepackage{graphicx}

\makeatletter
\let\old@ssect\@ssect 
\makeatother

\usepackage{natbib}
\bibliographystyle{ifacconf}

\usepackage{hyperref}
\hypersetup{
    colorlinks=true,
    linkcolor=black,
    filecolor=black,
    urlcolor=blue,
    citecolor=blue,
    }
\urlstyle{same}

\makeatletter
\def\@ssect#1#2#3#4#5#6{%
  \NR@gettitle{#6}
  \old@ssect{#1}{#2}{#3}{#4}{#5}{#6}
}
\makeatother




\usepackage{bbm}
\usepackage{bm}

\usepackage[algo2e]{algorithm2e}

\makeatletter
\DeclareRobustCommand{\qed}{%
  \ifmmode 
  \else \leavevmode\unskip\penalty9999 \hbox{}\nobreak\hfill
  \fi
  \quad\hbox{\qedsymbol}}
\newcommand{\openbox}{\leavevmode
  \hbox to.77778em{%
  \hfil\vrule
  \vbox to.675em{\hrule width.6em\vfil\hrule}%
  \vrule\hfil}}
\newcommand{\qedsymbol}{\openbox}
\newenvironment{proof}[1][\proofname]{\par
  \normalfont
  \topsep6\p@\@plus6\p@ \trivlist
  \item[\hskip\labelsep\itshape
    #1.]\ignorespaces
}{%
  \qed\endtrivlist
}
\newcommand{\proofname}{Proof}
\makeatother

\input{preamble/math}

  \newcommand{\newcolorlabel}[2]{%
  \expandafter\newcommand\csname #1\endcsname[1]{%
    \tikz[baseline]{\node[text=white,fill=#2,anchor=base,text height=1.3ex,text depth=0.1ex,font=\sffamily\bfseries]{##1}}}%
}

%
\newcommand{\newcommenter}[2]{%
  \expandafter\newcommand\csname #1\endcsname[1]{%
    \fcolorbox{#2}{#2}{\color{white}\textsf{\textbf{#1}}}
    {\color{#2}##1}}%
  \expandafter\newcommand\csname at#1\endcsname{%
    \fcolorbox{#2}{#2}{\color{white}\textsf{\textbf{@#1}}}
    {\color{#2}}}%
  \expandafter\newcommand\csname #1cite\endcsname[1]{%
    \csname #1\endcsname {[##1]}
  }%
  \expandafter\newcommand\csname #1ref\endcsname[1]{%
    \csname #1\endcsname {$\blacktriangleright$##1}
  }%
  \expandafter\newcommand\csname #1hl\endcsname[2]{%
    \colorbox{#2}{\color{white}\textsf{\textbf{#1}}}\sethlcolor{Azure2}\hl{##2}~%
    \expandafter\ifx\csname commentarrow\endcsname\relax$\leftarrow$\else \commentarrow[#2]\fi~%
    {\color{#2}##1}}%
  \expandafter\newcommand\csname #1st\endcsname[2]{%
    \colorbox{#2}{\color{white}\textsf{\textbf{#1}}}\sout{##2}~%
    \expandafter\ifx\csname commentarrow\endcsname\relax$\leftarrow$\else \commentarrow[#2]\fi~%
    {\color{#2}##1}}%
}
\newcommenter{TODO}{DodgerBlue1}
\newcommenter{rtron}{green!80!black}


\usepackage{comment}

\usepackage{pdfcomment}

\usepackage{soul}

\usepackage[normalem]{ulem}

\usepackage{csquotes}









\DeclarePairedDelimiter{\norm}{\lVert}{\rVert}







\begin{document}

\begin{frontmatter}

\title{
A Graph-Based Approach to Generate Energy-Optimal Robot Trajectories in Polygonal Environments
}

\author[Logan]{Logan E. Beaver}
\author[Roberto]{Roberto Tron}
\author[Christos]{Christos G. Cassandras}

\address[Logan]{Division of Systems Engineering, Boston University,\\ Boston, MA 02215 USA (e-mail: lebeaver@bu.edu).}
\address[Roberto]{Department of Mechanical Engineering
at Boston University, Boston, MA, 02215 USA (e-mail: tron@bu.edu).}
\address[Christos]{Division of Systems Engineering and
Department of Electrical and Computer Engineering, Boston University,
Boston, MA 02215 USA (e-mail: cgc@bu.edu).}

\thanks[footnoteinfo]{This work was supported in part by NSF under grants ECCS-1931600, DMS-1664644, CNS-1645681, CNS-2149511, by AFOSR under grant FA9550-19-1-0158, by ARPA-E under grant DE-AR0001282, by the MathWorks and by NPRP grant (12S-0228-190177) from the Qatar National Research Fund, a member of the Qatar Foundation (the statements made herein are solely the responsibility of the authors).}

\begin{abstract}

As robotic systems continue to address emerging issues in areas such as logistics, mobility, manufacturing, and disaster response, it is increasingly important to rapidly generate safe and energy-efficient trajectories.
In this article, we present a new approach to plan energy-optimal trajectories through cluttered environments containing polygonal obstacles.
In particular, we develop a method to quickly generate optimal trajectories for a double-integrator system, and we show that optimal path planning reduces to an integer program.
To find an efficient solution, we present a distance-informed prefix search to efficiently generate optimal trajectories for a large class of environments.
We demonstrate that our approach, while matching the performance of RRT* and Probabilistic Road Maps in terms of path length, outperforms both in terms of energy cost and computational time by up to an order of magnitude.
We also demonstrate that our approach yields implementable trajectories in an experiment with a Crazyflie quadrotor.
\end{abstract}

\begin{keyword}
    Optimal Control; Path Planning; Mobile Robots; Autonomous Robotic Systems
\end{keyword}

\end{frontmatter}

\section{Introduction}

Robotic systems are exploding in popularity, and in particular the potential for mobile robots to address emerging challenges in logistics, e-commerce, and transportation has skyrocketed. 
At the same time, there has been a push in the robotics and control systems communities to focus on long-duration autonomy, where robotic systems are left to perform tasks in an environment for much longer than a typical laboratory experiment, \cite{Egerstedt2021RobotAutonomy,Notomista2019Constraint-DrivenSystems}.
Managing the energy consumption of these robotic platforms is critical for the large-scale deployment of robotic systems, and implementing energy-optimal control strategies can significantly reduce the energy requirements of individual robots, \cite{chalaki2021CSM}.
This allows robots to remain in the field longer, \cite{Notomista2019TheRobot}, and reduces their battery size--which has a strong correlation with cost and environmental impact.

Generating energy-optimal trajectories to navigate complex environments is of critical importance for these applications.
One popular approach is to apply Control Barrier Functions (CBFs) to guarantee safety while driving the robot toward a destination using a reference input, \cite{Xiao2021BridgingVehicles}.
While these methods perform well and are robust, \cite{Bahreinian2021RobustProgramming}, they are fundamentally local, and there is a significant risk of the robot becoming stuck in a local minima without additional information about the environment. 
A practical technique to generate these higher-level trajectories is collocation, i.e., finding optimal parameters for a set of basis functions.
Under this approach, a designer selects an appropriate basis for their application, e.g., polynomial splines, \cite{Mellinger2011MinimumQuadrotors,Sreenath2013TrajectorySystem}, and the parameters of these basis functions are determined to yield an approximately optimal trajectory.
However, the performance of these approaches is limited by the choice of basis.

Sampling algorithms are a compelling alternative to collocation techniques, as many asymptotically approach the globally optimal solution as the number of samples increases.
Optimal Rapidly Exploring Random Trees (RRT*) and Probabilistic Road Maps (PRM) are two techniques that are frequently employed for complex environments, \cite{Karaman2010IncrementalPlanning,Geraerts2004APlanners}.
In both cases, states are randomly sampled from the domain and connected together to form a path between the initial and final state.
As the name implies, RRT* generates a tree that branches out from the initial (or final) state to generate a trajectory, whereas PRM samples the environment to find a feasible sequence of states.
Both approaches interpolate between states to generate a trajectory, but neither captures the energy consumption of the entire trajectory. 
%
In contrast, we solve the optimality conditions directly to generate a path for a mobile robot.
Despite the difficulties associated with optimal control (see \cite{Bryson1996Optimal1985}), recent work by \cite{Aubin-Frankowski2021LinearlyMethods} has applied kernel methods to quickly generate optimal trajectories for systems with linear dynamics, and a recent result by \cite{Beaver2021DifferentialTime} outlines an approach to quickly generate optimal trajectories for differentially flat systems, such as quadrotors, \cite{Morrell2018DifferentialFlight,Faessler2018DifferentialTrajectories}.

In this work, we build on \cite{Beaver2021DifferentialTime} to generate energy-optimal trajectories for a robot with double integrator dynamics to navigate an environment filled with polygonal obstacles.
We transform the optimal control problem into an equivalent integer program, where the decision variables are the sequence of constraint activations.
We generate an approximate solution using a distance-informed search over the graph of possible constraints, and we prove that our algorithm's complexity has polynomial scaling in the dimension of the problem. 
We also demonstrate in simulation that our approach outperforms RRT* and PRM in terms of energy cost and computational time while matching the performance of RRT* in terms of path length.

The remainder of this article is organized as follows: we formulate an energy-optimal trajectory problem and list our assumptions in Section \ref{sec:formulation}.
In Section \ref{sec:solution}, we generate the solution to the optimal control problem, and we transform the solution into an equivalent graph problem in Section \ref{sec:algorithm}.
We present simulation-based and laboratory experimental results in Sections \ref{sec:sim} and \ref{sec:experiment}, and we draw conclusions and present some directions for future work in Section \ref{sec:conclusion}.

\section{Problem Formulation} \label{sec:formulation}

Consider a robot with double-integrator dynamics operating in $\mathbbm{R}^{2}$,
\begin{equation}
\begin{aligned} \label{eq:dynamics}
    \dot{\bm{p}} &= \bm{v}, \\
    \dot{\bm{v}} &= \bm{u},
\end{aligned}
\end{equation}
where $\bm{p}, \bm{v}\in\mathbb{R}^2$ are the position and velocity of the robot, and $\bm{u}\in\mathbb{R}^2$ is the control input; we denote the state of the robot by $\bm{x} = [\bm{p}^T, \bm{v}^T ]^T$.
Our objective is to generate an energy-minimizing trajectory from an initial state $\bm{x}^0$ at time $t^0$ to a final state $\bm{x}^f$ at time $t^f$ while avoiding obstacles in the environment.
We approximate the robot's energy consumption with the $L^2$ norm of the control input, which is a standard energy surrogate in the literature on connected and automated vehicles, \cite{chalaki2021CSM}, ecologically-inspired robotics , \cite{Notomista2019Constraint-DrivenSystems}, and quadrotors, \cite{Mellinger2011MinimumQuadrotors}.
This leads to the cost functional,
\begin{equation} \label{eq:cost}
  J(\bm{u}) = \frac{1}{2}\int_{t^0}^{t^f} \norm{\bm{u}}^2 dt.
\end{equation}
Finally, the environment contains $N$ polygonal obstacles, where each of the $l \in \{1, 2, \dots, N\}$ polygons consists of $n_l$ vertices.
We index the vertices and faces using the set $\mathcal{F} = \{1, 2, \dots, n_1, \dots, n_1+n_2,\dots,\sum_{l=1}^{N} n_l  \}$ (the number of faces equals the number of vertices).
The obstacles are described by a vertex set and a face function,
\begin{align}
    \mathcal{V} &= \big\{ \bm{c}_k \in \mathbb{R}^2, k \in \mathcal{F} \big\}, \\
  \bm{f} &: k\in\mathcal{F} \mapsto (\bm{c}_{k_1}, \bm{c}_{k_2}) \in (\mathcal{V} \times \mathcal{V}),
\end{align}
where $\mathcal{V}$ defines the embedding of each vertex $\bm{c}_k$ in $\mathbb{R}^2$, and $\bm{f}(k)$ maps each face $k$ to a pair of vertices.

Each edge is a convex combination of its vertices, which we parameterize in terms of the distance $d\in(0, D_k)$,
\begin{equation}
    \bm{s}_k(d) = \frac{D_k - d}{D_k} \bm{c}_{k_1} + \frac{d}{D_k} \bm{c}_{k_2},
\end{equation}
where $D_k = ||\bm{c}_{k_2} - \bm{c}_{k_1}||$ is the distance between the vertices of edge $k$.
Each face has a corresponding tangent vector $\hat{\bm{t}}_k = \frac{\bm{c}_{k_2} - \bm{c}_{k_1}}{D_k}$ and a normal vector $\hat{\bm{n}}_k \perp \hat{\bm{t}}_k$ that is pointing away from the polygon.
Next, to ensure safety, we construct a function to determine the robot's distance from any edge $k\in\mathcal{F}$.
First, the projected position of the robot along $\bm{s}_k$ is,
\begin{equation} \label{eq:projection}
    P_{k}(\bm{p}) = (\bm{p} - \bm{c}_{k_1}) \cdot \hat{\bm{t}}_k,
\end{equation}
and then the robot's distance from the polygon edge is defined as
\begin{equation} \label{eq:signedDistance}
    d_k\big(\bm{p}\big) =
    \begin{cases}
        ||\bm{p} - \bm{s}_k \big( P_k(\bm{p}) \big) ||  & \text{if } P_k(\bm{p})\in[0, D_k], \\
        \norm{\bm{p} - \bm{c}_{k_1}} & \text{if } P_k(\bm{p}) < 0, \\
        \norm{\bm{p} - \bm{c}_{k_2}} & \text{if } P_k(\bm{p}) > D_k,\\
    \end{cases}
\end{equation}
which is the robot's distance to the nearest point of face $k$.
We ensure safety by imposing the collision-avoidance constraint,
\begin{equation} \label{eq:constraint}
    g_k(\bm{p}) = R - d_k(\bm{p}) \leq 0
\end{equation}
where $R \in \mathbb{R}_{>0}$ is the radius of a circle circumscribing the robot.
Finally, we formalize our optimal control policy for the robot in Problem \ref{prb:ocp}.
\begin{problem} \label{prb:ocp}
Generate the control trajectory that satisfies,
\begin{align*}
    \min_{\bm{u}} & \int_{t^0}^{t^f} \frac{1}{2} ||\bm{u}||^2 \, dt \\
    \text{subject to: }&  \\
    &\eqref{eq:dynamics}, \,
    \Big[\bm{x}(t^0), \bm{x}(t^f)\Big]  = \Big[ \bm{x}^0, \bm{x}^f \Big], \\
     &R - d_{k}(\bm{p}) \leq 0, \quad \forall k\in\mathcal{F}.
\end{align*}
\end{problem}

In order to solve Problem \ref{prb:ocp}, we introduce the following working assumptions.

\begin{assumption}[Tracking] \label{smp:tracking}
    The robot is capable of tracking the optimal solution to Problem \ref{prb:ocp}.
\end{assumption}

In general, integrator dynamics are useful for quickly planning trajectories, and they are kinematically feasible for linear systems and nonlinear systems that satisfy differential flatness, \cite{Murray1995DifferentialSystems}.
However, noise, disturbances, and unmodeled dynamics may result in tracking errors.
These challenges can be addressed through learning techniques (see \cite{Greeff2021ExploitingProcesses}) or by implementing a low-level tracking controller with control barrier functions (see \cite{Xiao2021BridgingVehicles,Ames2019ControlApplications}).

\begin{assumption}[Control Bounds] \label{smp:bounds}
    The control effort is upper bounded, and the time horizon $t^f - t^0$ is sufficiently large to ensure the control effort remains below the bound.
\end{assumption}

Assumption \ref{smp:bounds} simplifies our analysis to only consider state constraints that affect the trajectory.
This is the current standard for kernel-based approaches to optimal control (see \cite{Aubin-Frankowski2021LinearlyMethods}), and Assumption \ref{smp:bounds} can always be satisfied by increasing the final time $t^f$.
Note that this assumption is not critical for our approach; state and control bound constraints can be explicitly included in Problem \ref{prb:ocp} using the standard approach of \cite{Bryson1975AppliedControl}.

\begin{assumption}[Reasonable Environment] \label{smp:environment}
    We start the robot from rest, the obstacles are spaced at least a distance of $2R$ apart, a feasible path exists, and the environment is known a priori.
\end{assumption}

We employ Assumption \ref{smp:environment} to simplify our analytical solution in the sequel.
Our proposed approach can be expanded to handle any violation of Assumption \ref{smp:environment}; for example, a receding horizon control approach could be used in situations where the environment is not known a priori.
However, we omit these more complex cases for brevity.

\section{Solution Approach} \label{sec:solution}

We solve Problem \ref{prb:ocp} using the standard approach for continuous-time optimal control problems, \cite{Bryson1975AppliedControl}.
The obstacle avoidance constraint in Problem \ref{prb:ocp} is not an explicit function of the control action $\bm{u}$, thus the first step is to take two time derivatives so that the control input appears.
Taking a time derivative of \eqref{eq:constraint} yields,
\begin{align}
    \frac{d}{dt} \big( R - d_k(\bm{p}) \big) &= - \frac{d}{dt} d_k(\bm{p}) = -\frac{d}{dt}\Big(\bm{p} - \bm{s}_k(P_k(\bm{p}))\Big)\cdot\hat{\bm{n}}_k \notag\\
    &= -\bm{v}\cdot\hat{\bm{n}}_k + \frac{\partial \bm{s}_k}{\partial P_k}\frac{\partial P_k}{\partial \bm{p}}\frac{d \bm{p}}{dt} \cdot\hat{\bm{n}}_k \notag\\
    &= - \bm{v}\cdot\hat{\bm{n}}_k + (\bm{v}\cdot\hat{\bm{t}}_k)\hat{\bm{t}}_k\cdot\hat{\bm{n}}_k \notag\\
    &= -\bm{v}\cdot\hat{\bm{n}}_k.
    \label{eq:tangency}
\end{align}
Taking the derivative a second time, we obtain,
\begin{equation}
    \frac{d^2}{dt^2}\big(R - d_k(\bm{p})\big) = -\bm{u}\cdot\hat{\bm{n}}_k.
\end{equation}
Thus, the Hamiltonian is,
\begin{align} \label{eq:hamiltonian}
    H(\bm{x}, \bm{u}, \bm{\lambda}) =& \frac{1}{2}||\bm{u}||^2 + \bm{v}\cdot\bm{\lambda}^p + \bm{u}\cdot\bm{\lambda}^v \notag\\
    &- \sum_{k\in\mathcal{F}} \mu_k\big( \bm{u}\cdot\hat{\bm{n}}_k \big),
\end{align}
where $\bm{\lambda}^p$ and $\bm{\lambda}^v$ are the costates and $\mu_k$ are the inequality Lagrange multipliers that satisfy,
\begin{equation} \label{eq:lagrangeMult}
    \begin{cases}
        \mu_k = 0 \text{ if } \Big(R - d_k(\bm{p}) \Big) < 0, \\
        \mu_k \geq 0 \text{ if } \Big(R - d_k(\bm{p})\Big) = 0.
    \end{cases}
\end{equation}

Our objective is to find the optimal control trajectory that minimizes \eqref{eq:hamiltonian} and yields a solution for Problem \ref{prb:ocp}.
The robot follows integrator dynamics by our premise, thus our problem is amenable to the approach proposed in \cite{Beaver2021DifferentialTime}.
This leads to expressions for
the costates,
\begin{align}
    \bm{\lambda}^p &= \dot{\bm{u}} - \sum_{k\in\mathcal{F}}\dot{\mu}_k\hat{\bm{n}}_k, \label{eq:lambdaP} \\
    \bm{\lambda}^v &= \sum_{k\in\mathcal{F}}\mu_k\hat{\bm{n}} - \bm{u}, \label{eq:lambdaV}
\end{align}
and the control input that minimizes the Hamiltonian \eqref{eq:hamiltonian},
\begin{equation} \label{eq:ode}
    \ddot{\bm{u}} - \sum_{k\in\mathcal{F}}\ddot{\mu}_k\cdot\hat{\bm{n}}_k = 0.
\end{equation}
Equations \eqref{eq:lambdaP}--\eqref{eq:ode} are derived by manipulating the Euler-Lagrange and optimality conditions in \cite{Beaver2021DifferentialTime}.

The control trajectory $\bm{u}$ that solves \eqref{eq:ode} is a piece-wise function of unconstrained (singular) and constrained (non-singular) arcs that must be pieced together at so-called ``junctions'' to recover the optimal trajectory.
Let $t_1$ denote the instant that at least one constraint becomes active or inactive; we refer to this time and (unknown) state as a junction between two arcs.
Each junction has two possibilities:
\begin{enumerate}
    \item the constraint(s) becomes active for a single time-instant, and
    \item the constraint(s) becomes active over a non-zero interval of time.
\end{enumerate}
The first case corresponds to the robot instantaneously touching the face(s) of any polygon obstacle, while the second case corresponds to the robot moving parallel to polygon edge $k$ while satisfying $d_k(\bm{p}) = 0$ over a non-zero time interval.
Both cases may occur simultaneously; for example, if the robot touches vertex $k$ at $t_k$ and remains on one polygon edge over a non-zero interval of time afterward.

While a general solution to the constrained motion problem is possible, under Assumption \ref{smp:environment} we only consider Case 1 where the robot interacts with the polygon vertices instantaneously.
This leads to the following results, which can be extended to cover both cases.

\begin{lemma} \label{lma:feasible}
    Let the robot come into contact with vertex $i$ at time $t_k$, and let the two edges connected to this vertex have outward facing normal vectors $\hat{n}_{k_1}$ and $\hat{n}_{k_2}$.
    If an obstacle is convex at vertex $\bm{c}_i$, then the robot's motion is feasible if and only if,
    \begin{equation} \label{eq:lmaFeasible}
        \bm{v}(t_k)\cdot\hat{\bm{n}}_{k_1} \geq 0 \text{ or } \bm{v}(t_k)\cdot\hat{\bm{n}}_{k_2} \geq 0.
    \end{equation}
\end{lemma}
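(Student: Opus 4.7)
My plan is to reduce the feasibility requirement at the vertex to two first-order inequalities, one for each of the adjacent edge constraints, and then use the convex-corner geometry to combine them into the OR-of-normals condition in the statement. The starting point is that when the robot contacts vertex $\bm{c}_i$, both constraints $g_{k_j}(\bm{p}) = R - d_{k_j}(\bm{p}) \leq 0$ for $j \in \{1,2\}$ are simultaneously active, so the tangency computation in \eqref{eq:tangency} specializes to $\tfrac{d}{dt} g_{k_j}(\bm{p}(t_k)) = -\bm{v}(t_k) \cdot \hat{\bm{n}}_{k_j}$ at the contact instant.

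For the necessary direction I would argue by contraposition: if $\bm{v}(t_k)\cdot\hat{\bm{n}}_{k_j} < 0$ for both $j$, then both constraint derivatives are strictly positive, and a first-order Taylor expansion produces $g_{k_j}(\bm{p}(t)) > 0$ on an open right-neighborhood of $t_k$, violating the obstacle constraint. Geometrically, convexity at $\bm{c}_i$ means that the polygon interior locally coincides with the intersection of the two half-planes $\{\bm{q} : (\bm{q}-\bm{c}_i)\cdot\hat{\bm{n}}_{k_j} < 0\}$, so a velocity pointing into that wedge necessarily drives the robot into the obstacle.

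For the sufficient direction I would take, without loss of generality, $\bm{v}(t_k)\cdot\hat{\bm{n}}_{k_1} \geq 0$, which immediately forces $\tfrac{d}{dt}g_{k_1}(\bm{p}(t_k)) \leq 0$ and so preserves constraint $k_1$ at first order. Constraint $k_2$ is the delicate part: I would split on the sign of the projection $P_{k_2}(\bm{p}(t))$ near $t_k$ to account for the piecewise definition of $d_{k_2}$ in \eqref{eq:signedDistance}, and in each branch use a first-order expansion together with the outward orientation of $\hat{\bm{n}}_{k_1}$ at the convex corner to certify $d_{k_2} \geq R$. The main obstacle is exactly this case split, since the closest point on edge $k_2$ can transition from the vertex endpoint to an interior perpendicular projection as the robot moves away from $\bm{c}_i$, and I must argue that in either branch a motion with $\bm{v}\cdot\hat{\bm{n}}_{k_1} \geq 0$ does not shrink $d_{k_2}$ below $R$; I expect convexity of the corner, which forces $\hat{\bm{n}}_{k_1}\cdot\hat{\bm{n}}_{k_2} \geq 0$ with equality only at a right angle, to be the key geometric ingredient that closes the estimate.
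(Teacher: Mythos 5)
Your necessity argument is essentially the paper's: argue by contraposition that if both normal components of $\bm{v}(t_k)$ are negative, then by continuity the robot enters the corner wedge immediately after $t_k$, which is infeasible. The problem is in the sufficiency direction. The ``key geometric ingredient'' you propose to close the $k_2$ branch with --- that convexity at $\bm{c}_i$ forces $\hat{\bm{n}}_{k_1}\cdot\hat{\bm{n}}_{k_2}\geq 0$ --- is false. If $\theta\in(0,\pi)$ is the interior angle at a convex vertex, the outward normals of the two incident faces satisfy $\hat{\bm{n}}_{k_1}\cdot\hat{\bm{n}}_{k_2}=\cos(\pi-\theta)=-\cos\theta$, which is strictly negative whenever the interior angle is acute (e.g.\ $\theta=\pi/6$ gives $-\cos(\pi/6)<0$). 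So the estimate you are counting on is unavailable precisely for sharp convex corners, and the face-by-face case analysis on the piecewise form of $d_{k_2}$, as sketched, does not go through; nothing in your outline supplies a replacement bound in that regime.

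The way convexity actually enters --- and the route the paper takes --- is different: at a convex vertex the obstacle locally coincides with the intersection of the two inner half-planes $\{\bm{q}:(\bm{q}-\bm{c}_i)\cdot\hat{\bm{n}}_{k_j}\leq 0\}$, so the local exterior is the \emph{union} of the outer half-planes. Hence, taking w.l.o.g.\ $\bm{v}(t_k)\cdot\hat{\bm{n}}_{k_1}\geq 0$, continuity gives $(\bm{p}(t_k+\Delta t)-\bm{c}_i)\cdot\hat{\bm{n}}_{k_1}\geq 0$ for small $\Delta t>0$, and this single half-plane condition already certifies that the robot has not penetrated the obstacle; there is no need to separately certify the $k_2$ constraint, which is where your argument stalls. (Your instinct to track the constraint $R-d_{k_2}\leq 0$ through the piecewise definition \eqref{eq:signedDistance} is more faithful to the constraint as written than the paper's point-contact, non-penetration argument, but it is exactly the part that your chosen inequality cannot support. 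Also note that when the relevant dot product is exactly zero, a first-order/continuity argument only gives tangency and higher-order terms decide; this looseness is shared by the paper's proof, so it is not a distinguishing defect of yours.)
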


\begin{proof}
    By our premise, $\bm{p} = \bm{c}_i$ and the robot obeys integrator dynamics.
    This implies,
    \begin{equation} \label{eq:integral}
        (\bm{p}(t_k + \Delta t) - \bm{c}_i) \cdot \hat{\bm{n}}_f = \int_{t_k}^{t_k+\Delta t} \bm{v}(t_k)\cdot\hat{\bm{n}}_f dt,
    \end{equation}
    for each face $f \in \{k_1, k_2\}$ and arbitrarily small $\Delta t$.
    First, we prove the necessity of Lemma \ref{lma:feasible} by a contrapositive argument; let $\bm{v}(t_k)\cdot\hat{\bm{n}}_f < 0$ for each face $f\in\{k_1, k_2\}$.
    Continuity in $\bm{p}$ implies that, for any control input $\bm{u}$, there exists $\Delta t > 0$ such that the right hand side of \eqref{eq:integral} is negative.
    This implies that the robot's position is inside the polygon, and thus \eqref{eq:lmaFeasible} is a necessary condition for feasibility.
    Next, without loss of generality, let $\bm{v}(t_k)\cdot\hat{\bm{n}}_{k_1} \geq 0$.
    Again, continuity in $\bm{p}$ implies that \eqref{eq:integral} is non-negative for $\Delta t > 0$.
    This implies that the robot is outside of the polygon when vertex $\bm{c}_i$ is a convex corner, and thus \eqref{eq:lmaFeasible} is sufficient to guarantee feasibility.
\end{proof}

\begin{corollary} \label{cor:stop}
    If at time $t_k$ the robot contacts vertex $\bm{c}_i$ that is non-convex, the robot's trajectory is feasible if and only if $\bm{v}(t_k) = 0$.
\end{corollary}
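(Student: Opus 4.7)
The plan is to adapt the argument of Lemma \ref{lma:feasible}, with two modifications. First, at a non-convex vertex the local exterior of the polygon is a wedge of angle strictly less than $\pi$, so a point $\bm{p}$ near $\bm{c}_i$ is outside the obstacle if and only if \emph{both} inequalities $\hat{\bm{n}}_{k_1}\cdot(\bm{p}-\bm{c}_i)\geq 0$ and $\hat{\bm{n}}_{k_2}\cdot(\bm{p}-\bm{c}_i)\geq 0$ hold (an AND rather than the OR of Lemma \ref{lma:feasible}). Second, I would invoke feasibility on both sides of $t_k$, since the robot follows a continuous trajectory that must remain collision-free both before and after the contact instant.

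For necessity, I would apply the integral identity from the proof of Lemma \ref{lma:feasible} for each $f \in \{k_1, k_2\}$, once with $\Delta t > 0$ and once with $\Delta t < 0$. By continuity of $\bm{v}$ and taking $|\Delta t|$ small, the forward direction forces $\bm{v}(t_k)\cdot\hat{\bm{n}}_f \geq 0$ for both faces, while the backward direction forces $\bm{v}(t_k)\cdot\hat{\bm{n}}_f \leq 0$. Combining yields $\bm{v}(t_k)\cdot\hat{\bm{n}}_{k_1} = \bm{v}(t_k)\cdot\hat{\bm{n}}_{k_2} = 0$; because the two incident edges of a non-convex vertex are not collinear, $\hat{\bm{n}}_{k_1}$ and $\hat{\bm{n}}_{k_2}$ are linearly independent, and hence $\bm{v}(t_k) = \bm{0}$.

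For sufficiency, assuming $\bm{v}(t_k) = 0$ and using the integrator dynamics, $\bm{p}(t_k + \Delta t) - \bm{c}_i = \tfrac{1}{2}\bm{u}(t_k)\Delta t^2 + O(\Delta t^3)$, with the same leading-order expression holding on the backward side because $\Delta t^2$ is even; choosing $\bm{u}(t_k)$ in the feasible wedge (nonnegative inner product with both outward normals) then keeps the trajectory in the exterior on a punctured neighborhood of $t_k$. The main obstacle I expect is precisely this sufficiency step: zero velocity alone does not pin down a feasible trajectory without a compatible acceleration, so the corollary is best read---as in Lemma \ref{lma:feasible}---as asserting that $\bm{v}(t_k) = 0$ is the \emph{only} velocity consistent with any feasible trajectory through a reflex vertex, while the higher-order terms in the Taylor expansion remain free to realize feasibility.
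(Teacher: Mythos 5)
Your proof follows essentially the same route as the paper's: replace the OR of Lemma \ref{lma:feasible} by an AND at the reflex vertex, combine the forward-time feasibility condition with the backward-time (approach) condition to force $\bm{v}(t_k)\cdot\hat{\bm{n}}_{k_1} = \bm{v}(t_k)\cdot\hat{\bm{n}}_{k_2} = 0$, and conclude $\bm{v}(t_k)=\bm{0}$ because the two normals span $\mathbb{R}^2$. Your backward-in-time use of the integral identity is a slightly cleaner phrasing of the paper's ``to approach the vertex the velocity must satisfy both strict inequalities'' step, and your caveat about sufficiency requiring a compatible acceleration is a fair reading of what the paper itself leaves implicit.
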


\begin{proof}
    When a vertex $\bm{c}_i$ is non-convex, the condition
    \begin{equation} \label{eq:stop}
        \bm{v}(t_k) \cdot \hat{\bm{n}}_{k_1} \geq 0 \text{ and } \bm{v}(t_k) \cdot \hat{\bm{n}}_{k_2} \geq 0,
    \end{equation}
    is necessary and sufficient for feasibility, following the proof of Lemma \ref{lma:feasible}.
    However, for the robot to approach vertex $\bm{c}_i$, the robot's velocity must satisfy
    \begin{equation}
        \bm{v}(t_k) \cdot \hat{\bm{n}}_{k_1} < 0 \text{ and } \bm{v}(t_k) \cdot \hat{\bm{n}}_{k_2} < 0,
    \end{equation}
    for all $t\in[t_k - \Delta t, t_k)$ for sufficiently small $\Delta t > 0$.
    Thus, by continuity in $\bm{v}$, the only feasible solution is
    \begin{equation}
        \bm{v}(t_k) \cdot \hat{\bm{n}}_{k_1} = \bm{v}(t_k) \cdot \hat{\bm{n}}_{k_2} = 0.
    \end{equation}
    The vectors $\hat{\bm{n}}_{k_1}$ and $\hat{\bm{n}}_{k_2}$ span $\mathbbm{R}^2$ by our premise, which implies that $\bm{v}(t_k) = 0$.
\end{proof}

Next, we examine how the obstacle avoidance constraint \eqref{eq:constraint} affects the robot's trajectory when it comes into contact with a vertex.
In this case, the vertex essentially becomes an interior-point constraint that the robot must satisfy along its trajectory.

\begin{lemma} \label{lma:interiorPt}
    Let the robot instantaneously touch vertex $\bm{c}_j\in\mathcal{V}$ at time $t_i$ that joins faces $k_1$ and $k_2$.
    If the resulting trajectory is feasible (Lemma \ref{lma:feasible} or Corrolary \ref{cor:stop}), then the the following conditions are sufficient for optimality,
    \begin{align}
        \bm{p}(t_i) = \bm{c_j}, \bm{v}(t_i^-) = \bm{v}(t_i^+), \bm{u}(t_i^-) = \bm{u}(t_i^+), \notag\\
        \big(\dot{\bm{u}}(t_i^-) - \dot{\bm{u}}(t_i^+)\big) \cdot \bm{v}(t_i) = 0.
    \end{align}
\end{lemma}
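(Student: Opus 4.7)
The plan is to interpret the instantaneous vertex contact as an \emph{interior-point} constraint in the sense of \cite{Bryson1975AppliedControl}, namely $\bm{N}(\bm{x}(t_i), t_i) := \bm{p}(t_i) - \bm{c}_j = \bm{0}$, and to apply the associated jump and matching conditions. With multiplier $\bm{\pi} \in \mathbb{R}^2$ attached to $\bm{N}$, the standard theory supplies (i) continuity of the full state across $t_i$, (ii) a jump in the costate of the form $\bm{\lambda}(t_i^-) = \bm{\lambda}(t_i^+) + (\partial\bm{N}/\partial\bm{x})^T \bm{\pi}$, and (iii) continuity of the Hamiltonian at $t_i$, since $\bm{N}$ has no explicit time dependence.

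First, (i) immediately gives $\bm{p}(t_i) = \bm{c}_j$ and $\bm{v}(t_i^-) = \bm{v}(t_i^+)$. Next, because $\partial\bm{N}/\partial\bm{v} = \bm{0}$, the velocity costate satisfies $\bm{\lambda}^v(t_i^-) = \bm{\lambda}^v(t_i^+)$. Since the contact is instantaneous, $\mu_k$ and $\dot{\mu}_k$ vanish in a punctured neighborhood of $t_i$, so equations \eqref{eq:lambdaP}--\eqref{eq:lambdaV} reduce to $\bm{\lambda}^v = -\bm{u}$ and $\bm{\lambda}^p = \dot{\bm{u}}$ on each side of $t_i$, and continuity of $\bm{\lambda}^v$ forces $\bm{u}(t_i^-) = \bm{u}(t_i^+)$.

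For the last identity, I would substitute these reduced expressions into \eqref{eq:hamiltonian} to obtain $H = -\tfrac{1}{2}\|\bm{u}\|^2 + \bm{v}\cdot\dot{\bm{u}}$ on each side of $t_i$. Combining (iii) with the already established continuity of $\bm{u}$ and $\bm{v}$ collapses the Hamiltonian matching condition precisely to $\bm{v}(t_i)\cdot(\dot{\bm{u}}(t_i^-) - \dot{\bm{u}}(t_i^+)) = 0$. Sufficiency for optimality then follows because, conditional on the selected constraint-activation pattern, the running cost is strictly convex in $\bm{u}$ and the remaining dynamics and endpoint data are affine, so the Pontryagin conditions together with the interior-point matching pin down a unique minimizer of the reduced problem.

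The main obstacle will be reconciling the two descriptions of the multipliers at $t_i$: the inequality multipliers $\mu_k$ of \eqref{eq:lagrangeMult} are supported on the singleton $\{t_i\}$ and must be read distributionally, whereas the interior-point multiplier $\bm{\pi}$ lives outside the smooth Hamiltonian apparatus used in Section \ref{sec:solution}. I would handle this by identifying the impulsive contribution of $\mu_k$ concentrated at $t_i$ with $\bm{\pi}^T\,\partial\bm{N}/\partial\bm{x}$, which reconciles the two formalisms and lets the four stated conditions be read off directly from the standard interior-point jump and matching relations.
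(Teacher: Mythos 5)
Your proposal follows essentially the same route as the paper: it models the instantaneous vertex contact as a Bryson--Ho interior-point constraint, uses the costate jump condition (whose velocity component vanishes, so $\bm{\lambda}^v$ and hence $\bm{u}$ are continuous across $t_i$) together with Hamiltonian matching, and reads off $\big(\dot{\bm{u}}(t_i^-)-\dot{\bm{u}}(t_i^+)\big)\cdot\bm{v}(t_i)=0$ exactly as the paper does. The only cosmetic differences are that the paper writes the interior-point condition as the scalar $\|\bm{p}-\bm{c}_j\|-R$ with a scalar multiplier $\pi$ rather than your vector form with $\bm{\pi}\in\mathbb{R}^2$, and your closing convexity remark makes the sufficiency claim somewhat more explicit than the paper's argument.
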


\begin{proof}
Both faces $k_1$ and $k_2$ are captured by the constraint,
\begin{equation}
    d_{k_1}(\bm{p}) = d_{k_2}(\bm{p}) = ||\bm{p} - \bm{c}_j|| = R,
\end{equation}
at some unknown time $t_i$.
The resulting tangency condition is \citep{Bryson1975AppliedControl},
\begin{equation}
    N(\bm{x}, t) = 
        ||\bm{p} - \bm{c}_j|| - R,
\end{equation}
and the optimality conditions at $t_k$ are,
\begin{equation} \label{eq:lambdaJump}
    \bm{\lambda}^{T^-} = \bm{\lambda}^{T^+} + \pi \frac{\partial}{\partial \bm{x}} \bm{N},
\end{equation}
where $\pi$ is a constant Lagrange multiplier and the superscripts $^-$ and $^+$ correspond to variable evaluations just before and after the constraint activation time, respectively.
Combining this with \eqref{eq:lambdaV} yields,
\begin{equation}
    \bm{\lambda}^{v^-} - \bm{\lambda}^{v^+} = \bm{u}^- - \bm{u}^+ = 0,
\end{equation}
and thus the control input is continuous.
Furthermore, the Hamiltonian optimality condition is,
\begin{align}
    H^- &= H^+ + \frac{\partial}{\partial t} \bm{N} \notag\\
    &= -\frac{1}{2}||\bm{u}||^- + \bm{\lambda}^{p^-}\cdot\bm{v} = -\frac{1}{2}||\bm{u}||^+ + \bm{\lambda}^{p^+}\cdot\bm{v},
\end{align}
substituting \eqref{eq:lambdaP} and applying continuity of $\bm{u}$ yields the final condition,
\begin{equation}
    \Big(\dot{\bm{u}}^- - \dot{\bm{u}}^+\Big)\cdot\bm{v} = 0.
\end{equation}
\end{proof}

\begin{theorem} \label{thm:optimality}
    The optimal trajectory is a sequence of unconstrained arcs connected with junctions satisfying Lemma \ref{lma:interiorPt}.
\end{theorem}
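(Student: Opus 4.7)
The plan is to decompose the optimal trajectory into intervals on which no constraint is active, separated by isolated instants at which the robot touches a vertex of some obstacle, and then to glue these pieces together using the conditions already established. The starting point is the structural ODE \eqref{eq:ode}: whenever all $\mu_k$ vanish, \eqref{eq:ode} reduces to $\ddot{\bm{u}} = 0$, so $\bm{u}$ is affine in time on any such open interval, which is precisely what is meant by an \emph{unconstrained arc}.

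First, I would use Assumption \ref{smp:environment} to rule out Case 2 (constraint active over a positive-length interval). Because obstacles are at least $2R$ apart and the robot starts from rest, no two constraints can be simultaneously saturated on an open interval unless the robot slides along a single edge. Along such a sliding interval, \eqref{eq:tangency} forces $\bm{v}\cdot\hat{\bm{n}}_k = 0$, and the first-order condition at the entry junction together with the starting-from-rest condition would imply the edge segment can be shortcut by an interior unconstrained arc with strictly lower cost, contradicting optimality. Hence only Case 1 occurs, and the active set changes only on a (necessarily finite, by Assumption \ref{smp:bounds} on $t^f$) collection of isolated contact times $\{t_1,\dots,t_m\}$.

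Second, on each open interval $(t_i, t_{i+1})$ the complementary slackness condition \eqref{eq:lagrangeMult} gives $\mu_k \equiv 0$ for every $k\in\mathcal{F}$, so \eqref{eq:ode} yields $\ddot{\bm{u}}=0$ and hence an unconstrained arc. At each contact time $t_i$, feasibility is guaranteed by Lemma \ref{lma:feasible} (for a convex vertex) or Corollary \ref{cor:stop} (for a non-convex vertex, where the robot must come to rest). The instantaneous vertex contact is exactly the interior-point constraint $N(\bm{x},t)=\|\bm{p}-\bm{c}_j\|-R=0$ treated in Lemma \ref{lma:interiorPt}, which supplies the four required junction conditions: continuity of $\bm{p}$ at $\bm{c}_j$, continuity of $\bm{v}$ and $\bm{u}$, and the Hamiltonian-jump relation $(\dot{\bm{u}}^- - \dot{\bm{u}}^+)\cdot\bm{v} = 0$. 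Concatenating the arcs through these junctions gives a trajectory satisfying the necessary optimality conditions of \cite{Bryson1975AppliedControl} together with the boundary data $\bm{x}(t^0)=\bm{x}^0,\; \bm{x}(t^f)=\bm{x}^f$.

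The main obstacle I anticipate is the first step: cleanly excluding Case 2 under Assumption \ref{smp:environment}. One must argue that sliding arcs are never optimal for the double integrator in a polygonal environment with $2R$-separated obstacles, which requires a local perturbation argument showing the cost \eqref{eq:cost} strictly decreases when a sliding arc is replaced by an unconstrained arc leaving and re-contacting the boundary only at its endpoints. Once this reduction to isolated vertex contacts is justified, the rest of the proof is a straightforward assembly of Lemma \ref{lma:feasible}, Corollary \ref{cor:stop}, and Lemma \ref{lma:interiorPt} with the singular-arc equation $\ddot{\bm{u}}=0$ derived from \eqref{eq:ode}.
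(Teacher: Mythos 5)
There is a genuine gap, and it sits in two places. First, the step you yourself flag as the main obstacle---excluding Case 2 (sliding along an edge over a positive-length interval) by a perturbation argument showing a strict cost decrease---is not something the paper proves, and it is not a safe claim to lean on: for a double integrator with a position constraint, boundary arcs \emph{can} be optimal (the classical Bryson--Denham example), so ``a sliding segment can always be shortcut by an unconstrained arc of strictly lower cost'' needs a real argument specific to this 2D polygonal setting, and your appeal to the $2R$ spacing and the start-from-rest condition does not supply it. The paper sidesteps this entirely: in the discussion preceding Lemma \ref{lma:feasible} it declares that, under Assumption \ref{smp:environment}, only Case 1 (instantaneous vertex contact) is considered, and the proof of Theorem \ref{thm:optimality} simply invokes that restriction rather than proving sliding arcs are suboptimal. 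So your plan takes on a burden the paper never discharges, and the sketch you give for it would not survive scrutiny.

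Second, you omit what is actually the substance of the paper's proof. After noting that \eqref{eq:ode} gives $\ddot{\bm{u}}=\bm{0}$ away from junctions (your complementary-slackness step via \eqref{eq:lagrangeMult} matches this), the paper runs a degrees-of-freedom induction: each unconstrained arc \eqref{eq:MP} carries $8$ unknown coefficients; inserting a new junction adds $9$ unknowns ($8$ coefficients plus the junction time), and Lemma \ref{lma:interiorPt} supplies exactly $9$ equations ($\bm{p}(t_1^\pm)=\bm{c}_i$, continuity of $\bm{v}$ and $\bm{u}$, and the Hamiltonian jump condition), with the base case $n=1$ matching $8$ boundary conditions to $8$ coefficients. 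This counting is what lets the paper conclude that the junction conditions of Lemma \ref{lma:interiorPt} completely determine the concatenated trajectory, which is the content of the theorem as the paper uses it downstream (it is what makes \eqref{eq:SOE} a square, solvable system). Your proposal establishes only the qualitative structure (arcs glued at vertex contacts satisfying the Lemma \ref{lma:interiorPt} conditions, with feasibility from Lemma \ref{lma:feasible} or Corollary \ref{cor:stop}); to recover the theorem as the paper intends it, you would need to add the equations-versus-unknowns argument, or an equivalent well-posedness argument, in place of the perturbation step you could not complete.
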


\begin{proof}
    Under Assumption \ref{smp:environment}, the optimal trajectory of the robot may only touch obstacles instantaneously at a vertex.
    Thus, the control input that solves \eqref{eq:ode} is a piecewise-linear function that satisfies
    \begin{equation}
        \ddot{\bm{u}} = \bm{0}
    \end{equation}
    almost everywhere, except for the constraint junctions.
    Each unconstrained arc has $8$ unknowns, and we prove that the conditions of Lemma \ref{lma:interiorPt} are sufficient to completely determine the trajectory, which is optimal by definition, using induction.

    Let $n$ be the number of unconstrained arcs in the optimal trajectory, and let a trajectory with $n$ segments have an equal number of equations (optimality and boundary conditions) and unknowns (trajectory coefficients and junction times).
    Next, introduce a new junction at time $t_1$ such that one of the unconstrained arcs is split into two pieces; this creates a new system of equations with $n+1$ unconstrained arcs.
    This introduces $9$ additional unknowns ($8$ from the coefficients of the additional unconstrained arc and $1$ from the unknown time $t_1$).
    Lemma \ref{lma:interiorPt} yields the corresponding $9$ equations,
    $\bm{p}(t_1^-) = \bm{c}_i$,
    $\bm{p}(t_1^+) = \bm{c}_i$,
    $\bm{v}(t_1^-) = \bm{v}(t_1^+)$,
    $\bm{u}(t_1^+) = \bm{u}(t_1^+)$,
    and $(\dot{\bm{u}}^+(t_1) + \dot{\bm{u}}^-(t_1))\cdot\bm{v}(t_1) = 0$.
    Finally, consider the case when $n=1$; this is a system of $8$ equations (boundary conditions) and $8$ unknowns (trajectory coefficients).
\end{proof}

In the following section we apply Theorem \ref{thm:optimality} to convert Problem \ref{prb:ocp} into a shortest path problem.

\section{Optimal Trajectory Generation} \label{sec:algorithm}

Theorem \ref{thm:optimality} proves that the optimal trajectory can be constructed from a sequence of linear control arcs of the form,
\begin{equation}
    \begin{aligned} \label{eq:MP}
        \bm{p}_j(t) &= \bm{a}_{3,j}\,t^3  + \bm{a}_{2,j}\,t^2 + \bm{a}_{1,j}\,t + \bm{a}_{0,j}, \\
        \bm{v}_j(t) &= 3\bm{a}_{3,j}\,t^2 + 2\bm{a}_{2,j}\,t  + \bm{a}_{1,j}, \\
        \bm{u}_j(t) &= 6\bm{a}_{3,j}\,t   + 2\bm{a}_{2,j}, \\
    \end{aligned}
\end{equation}
by piecing them together at junctions with Lemma \ref{lma:interiorPt} to determine the constants of integration $\bm{a}_{0,j}$--$\bm{a}_{3,j}$ for each arc $j$.
Note that the only nonlinearities are with respect to $t$, and thus, we can rewrite the system of equations in the following form at each junction $t_k$,
\begin{equation} \label{eq:SOE}
    \begin{aligned}
        \bm{A}(t_k)\, \bm{c} &= \bm{z}, \\
        \Big(\dot{\bm{u}}(\bm{c}, t_k^-) - \dot{\bm{u}}(\bm{c}, t_k^+)\Big)\cdot \bm{v}(t_k) &= 0
\end{aligned}
\end{equation}
where $\bm{A}(t_k)$ is a $16\times16$ matrix containing powers of $t_k$ and constants, $\bm{c}$ is a $16\times1$ matrix of unknown trajectory coefficients, and $\bm{z}$ contains zeros and vertex positions.
Our next result proves the linear equations of \eqref{eq:SOE} are invertible when the values of $t_k$ are distinct, which yields the coefficients $\bm{c}$ as explicit values of the $t_k$'s; this enables us to converted Problem \ref{prb:ocp} into an equivalent shortest path problem.

\begin{lemma} \label{lma:invertible}
    If the junction times satisfy $t^0 < t_1 < t_2 \dots < t^f$, then the matrix $\bm{A}$ in \eqref{eq:SOE} is invertible.
\end{lemma}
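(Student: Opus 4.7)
The plan is to show that $\bm{A}(t_k)$ has trivial null space. Suppose $\bm{A}(t_k)\bm{c} = \bm{0}$. Reading the coefficients through \eqref{eq:MP}, the resulting piecewise-cubic trajectory $\bm{p}(t)$ satisfies the homogenized version of every linear condition encoded by $\bm{A}$: $\bm{p}(t^0) = \bm{v}(t^0) = \bm{0}$, $\bm{p}(t^f) = \bm{v}(t^f) = \bm{0}$, and at every interior junction $t_k$ the position vanishes, $\bm{p}(t_k^-) = \bm{p}(t_k^+) = \bm{0}$, while $\bm{v}$ and $\bm{u} = \ddot{\bm{p}}$ are continuous (these junction conditions are precisely the linear ones supplied by Lemma \ref{lma:interiorPt}). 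The curve is therefore globally $C^2$ on $[t^0, t^f]$, and each of its two scalar components $p$ is a $C^2$ piecewise cubic vanishing at every knot $t^0, t_1, \dots, t^f$ and satisfying $\dot p(t^0) = \dot p(t^f) = 0$.

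The next step is an energy argument showing $p \equiv 0$. On each sub-interval $(t_k, t_{k+1})$ the function $p$ is a cubic, so its fourth derivative vanishes identically. Integrating by parts twice on each sub-interval yields
\begin{equation*}
    \int_{t_k}^{t_{k+1}} \ddot p(t)^2\,dt = \Big[\dot p(t)\,\ddot p(t) - p(t)\,\dddot p(t)\Big]_{t_k}^{t_{k+1}}.
\end{equation*}
Summing over all sub-intervals, the contributions at interior knots cancel: continuity of $\dot p$ and $\ddot p$ pairs the $\dot p\,\ddot p$ factors identically across each junction, while $p(t_k) = 0$ kills the $p\,\dddot p$ terms regardless of any jump in $\dddot p$. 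At $t^0$ and $t^f$, both $p$ and $\dot p$ vanish, so the outer boundary terms also drop out. Hence $\int_{t^0}^{t^f} \ddot p(t)^2\,dt = 0$, so $\ddot p \equiv 0$ on $[t^0, t^f]$.

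With $\ddot p \equiv 0$, the component $p$ is piecewise linear; since $p$ vanishes at both endpoints of every sub-interval, $p \equiv 0$ on each sub-interval and therefore on all of $[t^0, t^f]$. Applying the same argument to the other scalar component shows that every entry of $\bm{c}$ is zero, so $\bm{A}(t_k)$ has trivial null space and is invertible. The strict ordering $t^0 < t_1 < \dots < t^f$ enters only indirectly, through the requirement that each sub-interval has positive length so that the piecewise-cubic form \eqref{eq:MP} is well defined. The main obstacle is the bookkeeping of the double integration by parts: one must carefully verify that the $p\,\dddot p$ boundary contributions at interior knots really cancel, despite $\dddot p$ generally being discontinuous across junctions.
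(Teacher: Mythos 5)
Your proof is correct, and it takes a genuinely different (and more substantive) route than the paper. The paper's own argument simply asserts that the rows of $\bm{A}$ — the boundary-condition and continuity equations — ``are linearly independent for distinct values of $t^0, t_1, \dots, t^f$'' and invokes the invertible matrix theorem; no mechanism for that independence is given. You instead work with the columns: you show the null space is trivial by reading a null vector as a piecewise cubic satisfying the homogenized conditions (clamped zero position and velocity at $t^0, t^f$, zero position and $C^1$, $C^2$ matching at each interior junction), and then run the classical energy/integration-by-parts argument from cubic-spline uniqueness, $\int_{t^0}^{t^f}\ddot p^2\,dt = \sum_k\bigl[\dot p\,\ddot p - p\,\dddot p\bigr]_{t_k}^{t_{k+1}} = 0$, where the $\dot p\,\ddot p$ terms telescope by continuity and the $p\,\dddot p$ terms die because $p$ vanishes at the knots even though $\dddot p$ may jump. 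This is exactly the kind of argument needed to substantiate the paper's one-line claim, and it has the added benefit of exposing why the statement is true: the linear system in \eqref{eq:SOE} is the optimality system of an energy-minimizing interpolation problem, and your computation is precisely the uniqueness proof for that interpolant. Two small refinements: note explicitly that the conclusion $\bm{c}=\bm{0}$ uses the strict ordering in an essential way (a cubic that vanishes identically on a subinterval of \emph{positive} length has all four coefficients zero — with coincident junction times $\bm{A}$ can genuinely be singular, so the hypothesis is doing more than making \eqref{eq:MP} ``well defined''); and, after concluding $\int\ddot p^2 = 0$, it is cleanest to argue per subinterval (each nonnegative summand vanishes, and $\ddot p$ is a polynomial there), which avoids any appeal to global continuity of $\ddot p$.
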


\begin{proof}
    Under Lemma \ref{lma:interiorPt}, matrix $\bm{A}$ is constructed using the polynomial powers of $t^0, t_1, t_2, \dots, t^f$ from \eqref{eq:MP}.
    Each row corresponds to a boundary condition or continuity equation, which are linearly independent for distinct values of $t^0, t_1, \dots, t^f$.
    Thus, $\bm{A}$ is invertible by the invertible matrix theorem.
\end{proof}


\begin{problem} \label{prb:graph}
Generate the finite sequence of vertices $\mathcal{S} = \{s_n\} \subset \{1, 2, \dots, N\}$ that yields the minimum-energy trajectory,
\begin{align*}
    &\min_{\mathcal{S}}  ||\bm{u}||^2 \\
    \text{subject to:}&  \\
    &\bm{x}(t^0) = \bm{x}^0, \, \bm{x}(t^f) = \bm{x}^f, \\
    &\eqref{eq:MP}, \quad j=0,1,2,\dots,|\mathcal{S}| \\
    &R - d_k\big(P_k(\bm{p})\big) \leq 0, \quad \forall k\in\mathcal{F}, \\
    &\eqref{eq:lmaFeasible} \text{ or } \eqref{eq:stop}, \, \eqref{eq:SOE} \quad k= 1, 2, \dots, |\mathcal{S}|,
\end{align*}
where $|\mathcal{S}|$ denotes the length of sequence $\mathcal{S}$, which is not known a priori.
\end{problem}

The solution space is a fully connected graph, where the edges correspond to unconstrained trajectory segments that connect boundary conditions and constraint junctions (vertices).
Unfortunately Problem \ref{prb:graph} is a boundary value problem that is not readily amenable to a traditional shortest path formulation, e.g,. Dijkstra's algorithm or A*.
The cost between any pair of nodes depends on the arrival times at those nodes, which must be calculated for an entire trajectory using \eqref{eq:SOE}.
Thus, we propose a distance-informed optimal trajectory generation algorithm using path prefixes, which we define next.

\begin{definition} \label{def:prefix}
    A sequence $\mathcal{S} = \{s_n \in \mathcal{F}\}_{n=1,\dots,N}$ of length $N$ is a \emph{feasible prefix} 
    if the corresponding trajectory satisfies,
    \begin{equation}
       R - d_k\big( P_k(\bm{p}) \big) \leq 0,
    \end{equation}
    for all $k\in\mathcal{F}$ and for all $t\in[t^0, t_{N}], t_{N} < t^f$.
    The distance of the prefix is given by,
    \begin{equation}
        ||\mathcal{S}|| = \sum_{i=2}^k ||\bm{c}_i - \bm{c}_{i-1}|| + ||\bm{c}_1 - \bm{p}_0 || + ||\bm{p}_f - \bm{c}_k ||,
    \end{equation}
    which captures the straight-line distance between the initial state and final state while passing through each vertex.
\end{definition}

We propose a distance-informed search of the fully-connected graph to find the minimum distance sequence that yields a feasible trajectory in Algorithn \ref{alg:main}.
Finding the minimum-distance trajectory is a useful approximation, as the unconstrained optimal trajectory is a straight line.
In this sense, the minimum-distance trajectory corresponds to the minimum deviation from the unconstrained optimal trajectory.

\begin{algorithm2e}
\caption{Our algorithm to generate the minimum-distance feasible trajectory.}\label{alg:main}
\KwData{$\mathcal{V}, \mathcal{F}$ (boundary conditions, vertices, indices)}
\KwResult{$\mathcal{S}$ (minimum distance sequence)}
$\mathcal{P} \gets \{k\}, \, k\in \mathcal{F}$ (initialize the set of prefixes)\;
$d_s = \infty$ (shortest distance found so far)\;
\While{$|\mathcal{P}| \neq 0$ and $\min_{\mathcal{Q}\in\mathcal{P}}\{||\mathcal{Q}||\} < d_s$ }{
    $\mathcal{S}' = \arg\min_{\mathcal{Q} \in \mathcal{P}} \{ ||\mathcal{Q}|| \}$\;
    $\mathcal{P} = \mathcal{P} \setminus \mathcal{S}'$\;
    \If{$\mathcal{S}'$ is feasible for Problem \ref{prb:graph}} {
        $\mathcal{S} = \mathcal{S}'$\;
        $d_s = ||\mathcal{S}||$\;
    }
    \For{$k\in\mathcal{F}\setminus\mathcal{S}'$} {
        $\mathcal{Q} = \mathcal{S}'\cup \{k\}$\;
        Solve \eqref{eq:SOE} for $\mathcal{Q}$\;
        \If{$\mathcal{Q}$ is a feasible prefix} {
            $\mathcal{P} \gets \mathcal{Q}$\;
        }
    }
}
\end{algorithm2e}

\begin{property} \label{prp:bound}
Eq. \eqref{eq:SOE} is a system of equations that yields a lower bound on the energy cost for any trajectory that contains the vertices $\mathcal{S}$ as a sub-sequence.
\end{property}

\begin{proof}
    The system of equations \eqref{eq:SOE} yield the sequence of times $t_1, t_2, \dots$ and trajectory coefficients that are energy-optimal but not necessarily feasible.
    By definition of optimality, no trajectory containing $\mathcal{S}$ as a sub-sequence can cost less energy than $\mathcal{S}$.
\end{proof}

Note that Algorithm \ref{alg:main} yields a sequence of vertices that minimizes the straight-line distance from the initial to final position.
Then, we use Property \ref{prp:bound} to determine the lower bound for the energy consumption of each remaining prefix $\mathcal{Q}\in\mathcal{P}$, and any prefix with a higher cost than the minimum-distance trajectory is discarded.
The remaining prefixes can be expanded using Algorithm \ref{alg:main} to determine the minimum-energy trajectory.
Algorithm \ref{alg:main} can also be used to generate feasible suffixes, i.e., by working backward from the final state to the origin, which may yield a different sequence.
Unfortunately neither approach is guaranteed to find the globally optimal sequence, which may require a combinatorial search through the entire domain.
This is because feasibility depends on the entire trajectory, and our algorithm neglects trajectories where any prefix subsequence is infeasible.
However, we demonstrate that our approach performs well in simulation, and we present a final result about how our algorithm scales into higher-dimensions. 

\begin{property}
    The complexity of Algorithm \ref{alg:main} is polynomial with respect to the state space of the robot.
\end{property}

\begin{proof}
For a fixed number of vertices, the complexity of Algorithm \ref{alg:main} is affected by solving \eqref{eq:SOE} to determine if a candidate prefix $\mathcal{Q}$ is feasible.
By Lemma \ref{lma:invertible} the matrix $\bm{A}$ is invertible and its inverse be computed offline.
Thus, the system of equations \eqref{eq:SOE} requires a matrix multiplication 
and root-finding, both of which have a polynomial computational complexity 
\cite{Borwein1987PiComplexity}.
\end{proof}

\subsection{Efficient Feasibility Check} \label{sec:subOverlap}

A significant challenge for our proposed algorithm is verifying whether a trajectory is feasible.
Here, we propose an algorithm that yields an analytical solution for the time of any constraint violation(s).
First, given a trajectory segment over the interval $[t_1, t_2]$ and polygon edge $k\in\mathcal{F}$ parameterized by $\lambda_k\in[0, 1]$, we seek a solution to the system of equations,
\begin{align}
    \bm{a}_3 t^3 + \bm{a}_2 t^2 + \bm{a}_1 t + \bm{a}_0 = \lambda_k \bm{c}_{k_1} + (1-\lambda_k) \bm{c}_{k_2},
\end{align}
which consists of two equations and two unknowns.
Solving the right hand side for $\lambda_k$ yields,
\begin{align}
    \lambda_k = \frac{1}{c_{k_2}^{x} - c_{k_1}^{x}} \Big(  a_3^x t^3 + a_2^x t^2 + a_1^x t + a_0^x - c_{k_2}^{x} ), \\
    \lambda_k = \frac{1}{c_{k_2}^{y} - c_{k_1}^{y}} \Big(  a_3^y t^3 + a_2^y t^2 + a_1^y t + a_0^y - c_{k_2}^{y} ),
\end{align}
where the superscripts $x, y$ denote the respective vector components.
Setting both equations equal and combining terms yields a cubic polynomial for the crossing time $t$,
\begin{align}
    t^3   \Big\{ a_3^x(c_{k_2}^{y} - c_{k_1}^{y}) -  a_3^y(c_{k_2}^{x} - c_{k_1}^{x}) \Big\} \notag\\
    + t^2 \Big\{ a_2^x(c_{k_2}^{y} - c_{k_1}^{y}) -  a_2^y(c_{k_2}^{x} - c_{k_1}^{x}) \Big\} \notag\\
    + t   \Big\{ a_1^x(c_{k_2}^{y} - c_{k_1}^{y}) -  a_1^y(c_{k_2}^{x} - c_{k_1}^{x}) \Big\} \notag\\
    + \Big\{ (a_0^x - c_{k_2}^{x})(c_{k_2}^{y} - c_{k_1}^{y}) \notag\\
    - (a_0^y - c_{k_2}^{y})(c_{k_2}^{x} - c_{k_1}^{x}) \Big\} = 0, \label{eq:tCubicCrossing}
\end{align}
which has a straightforward analytical solution yielding three values of $t$.
Any imaginary roots of \eqref{eq:tCubicCrossing}, and real roots satisfying $t\not\in[t_1, t_2]$, can be discarded, as they do not correspond to a constraint violation.
For each $t\in[t_1, t_2]$, a constraint violation may only occur when $\lambda_k \in[0, 1]$;
if any $\lambda_k \in (0, 1)$ the trajectory crosses edge $k$ and is infeasible under Assumption \ref{smp:environment}, whereas if $\lambda_k = 0$ or $\lambda_k = 1$, then Lemma \ref{lma:feasible} or Corollary \ref{cor:stop} determine whether the trajectory is feasible for convex and non-convex corners, respectively.

\section{Simulation Results} \label{sec:sim}

We validated the performance of our algorithm using two approaches.
First, we randomly generated polygonal environments with the following procedure: 1) uniformly place $50$ points in a $10\times 10$ m domain at random and remove any points within $1$ m of the initial and final position, 2) cluster the points into $12$ groups using $k$-nearest neighbors, 3) discard any clusters with one or two points, and take the convex hull of the remaining clusters to generate obstacles.
In addition to this statistical study, we also generated trajectories in MATLAB's \emph{complex map} environment and compared the results with the built-in RRT* and Probabilistic Road Map (PRM) implementations with the parameters suggested by the documentation.

\begin{figure}[ht]
    \centering
    \includegraphics[width=0.8\linewidth]{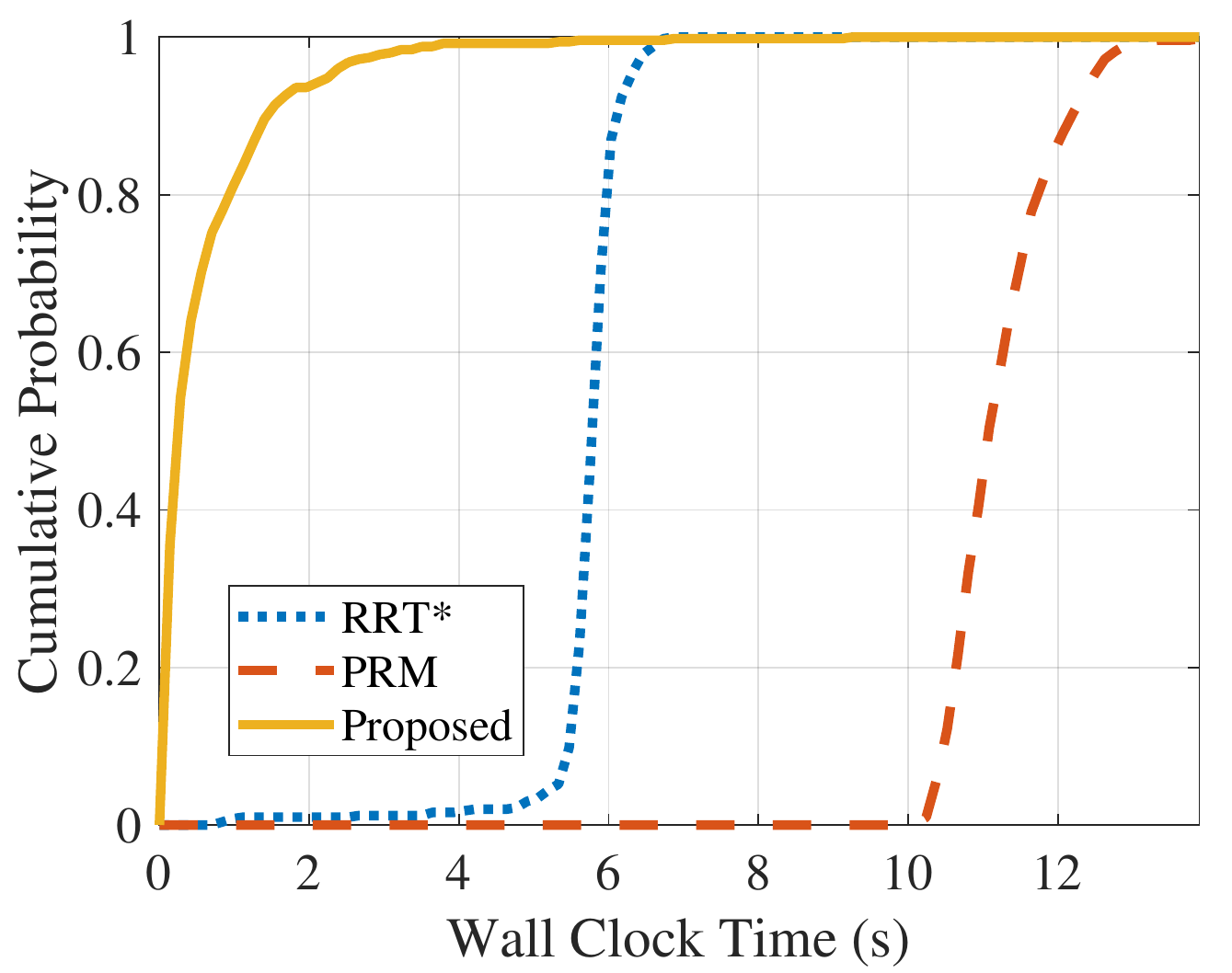}
\caption{Wall clock time required to generate the minimum-distance optimal trajectories using Algorithm \ref{alg:main}, RRT*, and PRM for $500$ randomly generated environments. }
    \label{fig:statistical-results}
\end{figure}

Figure \ref{fig:statistical-results} shows a cumulative distribution function for the wall-clock time for $500$ randomized trials using our proposed approach, RRT*, and PRM.
In these unstructured environments, our approach out-performs RT* and PRM by an order of magnitude.
Our proposed approach takes less than a second to resolve in over 90\% of cases, whereas fewer than 10\% of RRT* and PRM trajectories take less than $5$ seconds and $10$ seconds, respectively.
We generated the RRT* and PRM solutions using $2500$ nodes, as this yielded paths of a similar length to our solution.

We note that given a sequence of vertices, our approach takes approximately $10$ ms to find the optimal sequence $t_k$.
The remaining computational effort checking constraint satisfaction and manipulating the queue by adding and removing prefixes, which are computationally tractable for unstructured environments of disconnected polygons.
Fig. \ref{fig:environments} demonstrates the resulting trajectories for environments with short, medium, and long computational times.

\begin{figure*}
    \centering
    \includegraphics[width=.3\linewidth]{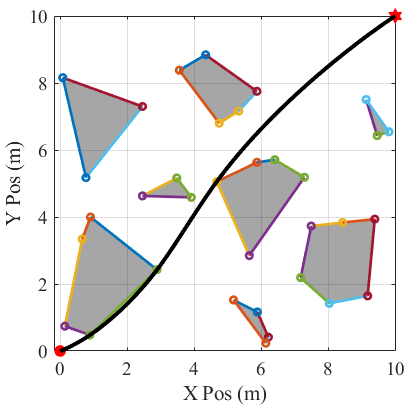}
    \includegraphics[width=.3\linewidth]{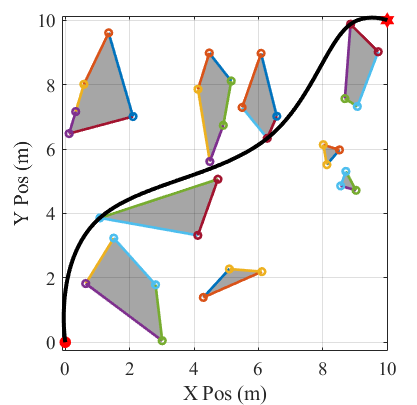}
    \includegraphics[width=.3\linewidth]{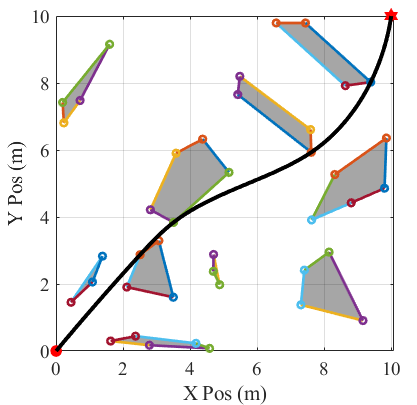}
    \caption{From left to right, the environments and trajectories for short ($t=0.14$ s), medium ($t=1.9$ s), and long ($t=3.4$ s) computational times.}
    \label{fig:environments}
\end{figure*}

Next, we compared our proposed algorithm to RRT* and PRM using MATLAB's built-in \emph{complex environment} map.
We also emphasize that both of these sampling-based methods can only generate the minimum-distance path, and they have no concept of energy consumption.
To generate a minimum-energy path, the methods would need to sample the five-dimensional space $(\bm{p}, \bm{v}, t)$ instead of the two-dimensional position space, and as a rule of thumb, the complexity of sampling methods grows exponentially with the dimension of the state space.
Furthermore, the value of $t$ along a particular path must increase monotonically from $t^0$ to $t^f$, and na\"{i}vely sampling $t$ from a uniform distribution is insufficient to achieve this.

The performance of each algorithm as a function of wall-clock time is presented in Figs. \ref{fig:costs} and \ref{fig:distance}.
Fig. \ref{fig:costs} presents the lower bound on the path cost for RRT* and PRM; this demonstrates that our approach outperforms the other methods by an order of magnitude.
In fact, the trajectories generated by RRT* and PRM tend to induce small oscillations that increase the path cost significantly.
Fig. \ref{fig:distance} demonstrates that RRT* takes a similar amount to match the path length of our proposed algorithm even though we do not optimize for path length, and the performance of PRM is significantly worse in both regards.
Our approach also requires less memory usage, as we only consider combinations of vertices while RRT* and PRM sample tens of thousands of points.
The bottleneck of Algorithm \ref{alg:main} is manipulating the vertex queues, and thus our approach does not outperform RRT* and PRM as significantly in this dense, structured environment with a large number of vertices and edges.

\begin{figure}[ht]
    \centering
    \includegraphics[width=0.8\linewidth]{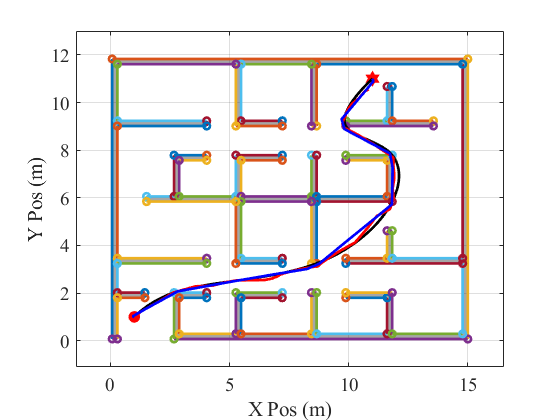}
    \caption{MATLAB's \emph{complex environment} showing the trajectories generated by RRT* (144 s clock time) in red, PRM (155 s clock time) in blue, and our proposed solution (32 s clock time) in black.}
    \label{fig:complex-map}
\end{figure}

\begin{figure}
    \centering
    \includegraphics[width=0.8\linewidth]{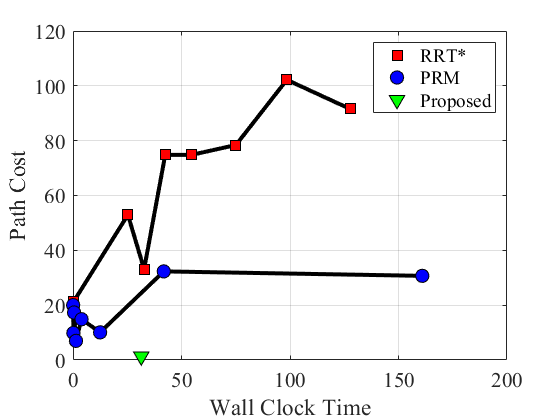}
    \caption{Total cost for different RRT* and PRM iterations; the cost for RRT* and PRM are estimated using \eqref{eq:SOE}, which constitutes a lower bound for each method.}
    \label{fig:costs}
\end{figure}

\begin{figure}
    \centering
    \includegraphics[width=0.8\linewidth]{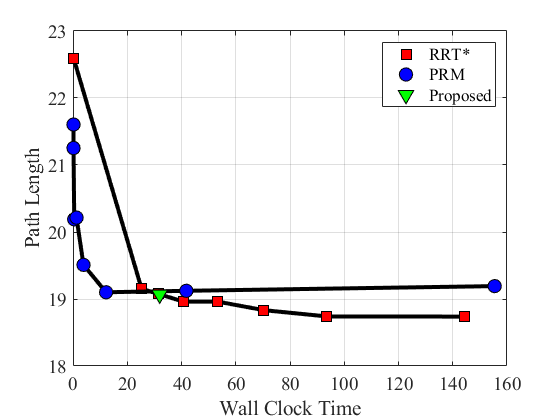}
    \caption{Total path length for different numbers of RRT* and PRM iterations; our proposed approach is close to the performance of both methods despite the fact that we do not minimize path length.}
    \label{fig:distance}
\end{figure}

\section{Experimental Results} \label{sec:experiment}

Finally, we demonstrate that our generated trajectory satisfies Assumption \ref{smp:tracking} by performing an experiment with a BitCraze CrazyFlie quadrotor.
To track the trajectory we employed the CrazySwarm software package of \cite{crazyswarm}, which has built-in support for tracking polynomial position trajectories that are up to 8th order.
The environment and generated trajectory are presented in Fig. \ref{fig:experiment}; it is an $2.5\times 5$ m arena with $0.6$ m ($2$ ft) square virtual obstacles that we inflated by $10$ cm to account for the size of the Crazyflie and tracking errors.
We placed the virtual obstacles in an alternating grid to create a series of connected corridors, and we generated the trajectory offline as a sequence of cubic splines.
We captured the quadrotor's position data at $100$ Hz using a VICON motion capture system, and trajectory tracking was handled using the default paramters in Crazyswarm.
To synchronize the start time of the experiment with the desired trajectory, we selected the initial VICON frame that minimized the cumulative root-mean-square error of the drone's trajectory.

The tracking error at each time instant is presented in Fig. \ref{fig:error}, which measures the Euclidean distance between the desired and actual position of the drone at each time instant.
We observed oscillations in the Crazyflie's position before and after the experiment.
These oscillations are a likely cause of the observed tracking error, which grows and oscillates with no apparent dependence on the quadrotor's speed.
Despite this, the quadrotor tracked the reference trajectory with a maximum deviation of $3$ cm throughout.
Given the CrazyFlie's radius of $6.5$ cm, this deviation was sufficiently small to avoid all of the virtual obstacles.

\begin{figure}[ht]
    \centering
    \includegraphics[width=\linewidth]{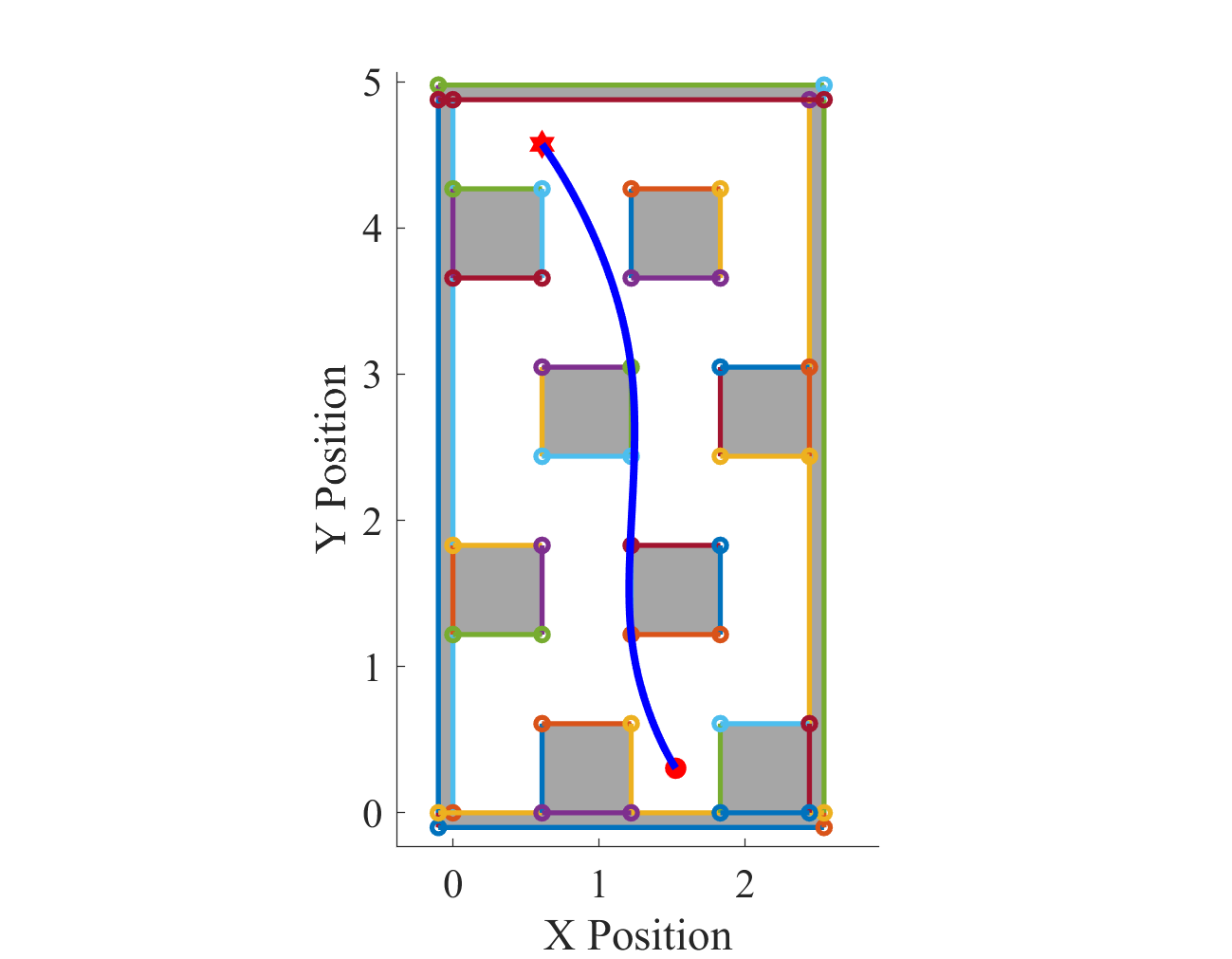}
    \caption{Experimental setup with inflated square obstacles. The robot starts at the (bottom) red circle and navigates to the (top) red star.}
    \label{fig:experiment}
\end{figure}

\begin{figure}
    \centering
    \includegraphics[width=0.8\linewidth]{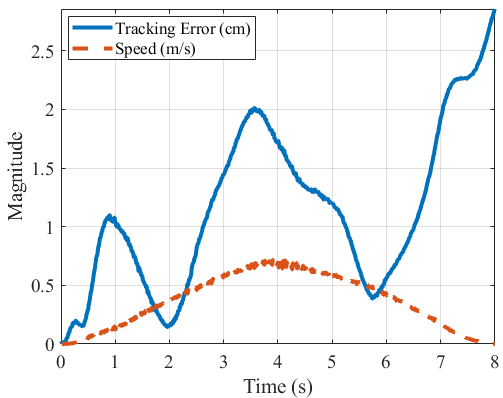}
    \caption{Speed and instantaneous tracking error over the duration of the experiment.}
    \label{fig:error}
\end{figure}

\section{Conclusion} \label{sec:conclusion}

In this article, we proposed a minimum-energy formulation for an autonomous robot to navigate an environment filled with polygonal obstacles.
We transformed the optimal control problem, which minimizes over a function space, into a sequencing problem, which minimizes over a finite set of vertices.
Furthermore, we demonstrated that our approach has polynomial scaling with respect to the size of the robot's state space, and our approach finds shorter and more energy-efficient trajectories significantly faster than comparable sampling-based methods.
Finally, we demonstrated that our reference trajectory can be tracked by a physical robot.

Relaxing Assumptions \ref{smp:bounds} and \ref{smp:environment} to solve the more general optimal control problem is a clear research direction, along with further physical experiments where robots must localize and perceive the obstacles.
Formulating a receding horizon control problem is another compelling research direction, where the robot must re-generate its energy-optimal trajectory as it explores the environment.
Including non-spatial constraints, such as reduced speed zones, to augment our proposed graph-based approach is another interesting research direction.
Finally, determining under what conditions a suffix or prefix search yields the energy-optimal vertex sequence is necessary to guarantee solution optimality; our prefix-based approach is somewhat similar to RRT, so introducing sampling could help fully explore the feasible space.

\bibliography{mendeley,IDS_Pubs}

\end{document}